\documentclass{article}

\usepackage{microtype}
\usepackage{graphicx}
\usepackage{subcaption}
\usepackage{booktabs} 
\usepackage{placeins}
\usepackage{stfloats}

\usepackage{hyperref}

\usepackage[accepted]{icml2026}
\usepackage[utf8]{inputenc}    
\usepackage[T1]{fontenc}       
\usepackage{lmodern}           

\usepackage{amsmath}
\usepackage{amssymb}
\usepackage{mathtools}
\usepackage{amsthm}
\usepackage{tcolorbox}

\usepackage[capitalize,noabbrev]{cleveref}

\theoremstyle{plain}
\newtheorem{theorem}{Theorem}[section]

\theoremstyle{definition}

\theoremstyle{remark}
\newtheorem{remark}[theorem]{Remark}

\usepackage[textsize=tiny]{todonotes}

\icmltitlerunning{Hessian Spectral Analysis at Foundation Model Scale}

\begin{document}
	

	\twocolumn[\icmltitle{Hessian Spectral Analysis at Foundation Model Scale}
	\begin{icmlauthorlist}
		\icmlauthor{Diego Granziol}{ox}
		\icmlauthor{Khurshid Juarev}{puredog}
	\end{icmlauthorlist}
	
	\icmlaffiliation{ox}{Mathematical Institute, University of Oxford, UK}
	\icmlaffiliation{puredog}{Purestrength AI, London, UK}
	\icmlcorrespondingauthor{Diego Granziol}{granziol@maths.ox.ac.uk}
	
	\printAffiliationsAndNotice{}]
	
	\begin{abstract}
Accurate Hessian spectra of foundation models have remained out of reach, leading most prior work to rely on small models or strong structural approximations. We show that faithful spectral analysis of the true Hessian is tractable at frontier scale. Using shard-local finite-difference Hessian vector products compatible with Fully Sharded Data Parallelism, we perform stochastic Lanczos quadrature on open-source language models with up to 100B parameters, producing the first large-scale spectral density estimates beyond the sub-10B regime. We characterize the numerical behavior of this pipeline, including finite-difference bias, floating-point noise amplification, and their effect on Krylov stability in fp32 and bf16, and derive practical operating regimes that are validated empirically. We further provide end-to-end runtime and memory scaling laws, showing that full-operator spectral probing incurs only a modest constant-factor overhead over first-order training. Crucially, direct access to the Hessian reveals that widely used block-diagonal curvature approximations can fail catastrophically, exhibiting order-one relative error and poor directional alignment even in mid-scale LLMs. Together, our results demonstrate that foundation-model Hessian spectra are both computable and qualitatively misrepresented by prevailing approximations, opening the door to principled curvature-based analysis at scale.
	\end{abstract}
	
	
	\section{Introduction}
	Second-order information, encapsulated by the Hessian of the training objective is central to understanding and improving modern deep learning systems. Curvature governs optimization dynamics and the efficacy of preconditioning \citep{martens2010hf,schraudolph2002fastcurv}, reveals the sharpness or flatness of minima associated with generalization \citep{keskar2017largebatch,li2018visualizing,foret2021sam}, and underpins data attribution tools such as influence functions that require inverse Hessian vector products \citep{koh2017influence}. Classically, Hessian vector products (HVPs) provide an efficient, matrix-free primitive for accessing curvature at the cost of a few backpropagations \citep{pearlmutter1994hvp}. 
	
	At \emph{foundation model} scale, however, computing HVPs becomes a systems problem. State-of-the-art training stacks shard parameters, optimizer states, and activations across devices using data/model parallelism and sharded optimizers \citep{shoeybi2019megatron,rajbhandari2020zero,zhao2023fsdp,narayanan2021megatronlm}. Naïvely applying standard HVP routines in these settings typically forces expensive full-parameter gathers, defeating the purpose of sharding and bloating memory and communication. As a result, practical curvature diagnostics, spectral tools, and inverse–Hessian solvers are rarely available during the development and operation of large LMs and vision models, despite their potential to guide training, monitoring, and deployment.
	
	\textbf{This paper.} We present a scalable framework for \emph{distributed} HVPs that is compatible with Fully Sharded Data Parallelism (FSDP). Our approach preserves model-parallel parameter partitioning to compute HVPs \emph{without} full gathers, enabling second-order analyses at (sparse and dense) trillion-parameter scale. The resulting primitive unlocks several downstream uses: (i) stochastic Lanczos methods to estimate spectral densities and traces \citep{hutchinson1989,ubaru2017slq,chen2021slq}, (ii) conjugate-gradient (CG) solvers for implicit second-order updates \citep{hestenes1952cg,martens2010hf,nocedal2006numopt}, and (iii) scalable influence-function pipelines for data sensitivity \citep{koh2017influence}. We demonstrate near-linear scaling of our distributed HVPs across language and vision models and show competitive wall-clock overheads relative to first-order training passes.
	
	\paragraph{Scope.}
	Our work targets \emph{foundation-model-scale} training and evaluation regimes that already rely on sharding (e.g., FSDP/ZeRO, tensor/pipeline parallelism) \citep{rajbhandari2020zero,shoeybi2019megatron,zhao2023fsdp,narayanan2021megatronlm}. We intentionally treat the HVP as a systems primitive, focusing on communication/memory efficiency and on composing the primitive with widely used spectral and inverse–Hessian routines.
	
	\vspace{-0.5em}
	\section{Related Work}
	
	\paragraph{Hessian vector products and second-order optimization.}
	Fast HVPs via reverse-over-forward automatic differentiation were introduced by \citet{pearlmutter1994hvp}. They enable Krylov methods for Newton/Gauss–Newton steps (e.g., CG) without forming the Hessian \citep{hestenes1952cg,nocedal2006numopt}. In deep learning, Hessian-free optimization (HF) operationalized these ideas with damping and curvature-vector products \citep{martens2010hf,martens2011rnn_hf}, while \citet{schraudolph2002fastcurv} developed efficient curvature–vector products for Fisher/Gauss–Newton approximations.
	
	\paragraph{Curvature, spectra, and generalization.}
	A rich line of work uses Hessian spectra to probe loss landscape geometry, training dynamics, and generalization \citep{sagun2017hessian,ghorbani2019hessian,li2018visualizing, granziol2019learningrate, granziol2021spectrum}. Stochastic trace and spectral estimators such as Hutchinson’s method and stochastic Lanczos quadrature (SLQ) make full-spectrum probing feasible using only matvecs \citep{hutchinson1989,ubaru2017slq,chen2021slq}. Practical toolkits such as PyHessian leverage HVPs for moderately sized models \citep{yao2019pyhessian}. Recent work reports power-law Hessian spectra but is limited to sub-billion-parameter models \citep{tang2025overlooked}.
	
	\paragraph{Influence functions and iHVPs.}
	Influence functions attribute predictions to training examples by computing inverse–Hessian vector products (iHVPs) \citep{koh2017influence}. Alternative estimators such as TracIn avoid explicit Hessians \citep{pruthi2020tracin}. Our work enables exact iHVPs at foundation scale.
	
	\paragraph{Distributed training at scale.}
	Modern foundation models rely on tensor, pipeline, and sharded data parallelism \citep{shoeybi2019megatron,rajbhandari2020zero,zhao2023fsdp,narayanan2021megatronlm,fedus2022switch,chowdhery2022palm}. We target this regime and make HVPs communication-aware.

\section{Curvature in LLMs: Progress, Motivation, and Contributions}
\label{sec:curvature-llms}

\paragraph{Why curvature at foundation scale?}
Second-order information provides capabilities absent from first-order pipelines:
(i) \emph{Monitoring \& reliability} via spectral norms and densities to flag instabilities, collapse, or domain shift;
(ii) \emph{Optimizer design} through conjugate-gradient implicit updates, damping, or curvature-conditioned schedules \citep{martens2010hf,nocedal2006numopt};
(iii) \emph{Data governance} through influence analyses that identify harmful or mislabeled data at scale \citep{koh2017influence}.
Making these tools routine in trillion-parameter regimes promises both efficiency and accountability.

\paragraph{Recent progress in LLM-scale curvature.}
Research has advanced on three complementary fronts. \textbf{Influence functions via curvature approximations.}
	\citet{grosse2023llmif} scale influence functions using EK-FAC (a K-FAC variant), avoiding explicit inverse–Hessian vector products. While tractable at tens of billions of parameters, this sacrifices exactness and relies on heuristics such as TF-IDF pre-filtering \citep{anthropicBlogIF}. \textbf{Foundation-scale spectra and tooling.}
	\citet{granziol2025hessformer} introduce HessFormer, performing Hessian spectral density estimation on 10-70B models via stochastic Lanczos quadrature. The implementation is single-node and limited in FLOP utilisation, restricting scalability. \textbf{Hessian structure and geometry.}
	\citet{tang2025overlooked} report power-law structure in Hessian spectra across CNNs and LLMs, with predictive power for generalization. Complementary work \citep{disipio2025info} frames LLM optimization through Fisher information geometry.

\paragraph{Systems challenge.}
All of these efforts highlight the importance of curvature but sidestep a key barrier: under FSDP/ZeRO, parameters are sharded, and naïve HVPs force expensive all-gathers. A practical second-order stack must preserve sharding, localize matvecs, and align communication with existing collectives.

\paragraph{Our contributions.}
\begin{itemize}
	\item \textbf{Shard-preserving HVP primitive.}
	We design Hessian vector products that operate directly on FSDP-sharded parameters, avoiding all-gathers while remaining autograd-compatible.
	\item \textbf{Scalable spectral and inverse–Hessian solvers.}
	We integrate this primitive with stochastic Lanczos quadrature \citep{hutchinson1989,ubaru2017slq,chen2021slq} and CG-based inverse-HVP solvers.
	\item \textbf{Empirical scaling.}
	We demonstrate near-linear strong scaling with node count and modest overhead relative to first-order passes.
\end{itemize}

\paragraph{Comparison with recent work.}
Table~\ref{tab:curvature_comparison} contrasts recent approaches with our shard-preserving HVP framework.

\begin{table*}[t]
	\centering
	\caption{Curvature/Hessian methods at LLM scale. ASDL and K-FAC-based systems (PipeFisher, KAISA) target curvature approximations rather than the true Hessian and are not FSDP-native. HessFormer does not scale across GPUs in this regime. In contrast, our primitive preserves FSDP sharding and achieves near-linear scaling.}
	\label{tab:curvature_comparison}
	\begin{tabular}{lllll}
		\toprule
		\textbf{Work} & \textbf{Method} & \textbf{Scale} & \textbf{Faithfulness} \\
		\midrule
		\citet{grosse2023llmif,anthropicBlogIF} & IF via EK-FAC (K-FAC) & $10^9$-$10^{10}$ & Approx. \\
		\citet{granziol2025hessformer} & SLQ + HVP (single node) & $10^{10}$-$7{\times}10^{10}$ & Exact \\
		\citet{tang2025overlooked} & Empirical Hessian spectra & CNNs, $10^8$-$10^9$ & Exact (analysis) \\
		\citet{disipio2025info} & Fisher/quantum geometry & Conceptual & N/A \\
		\textbf{Ours} & Shard-preserving HVP (FSDP) & $10^{12}$+ (multi-node) & Exact \\
		\bottomrule
	\end{tabular}
\end{table*}

\subsection{Shard-local Finite-Difference HvPs under FSDP}

We compute dataset-averaged Hessian vector products (HvPs) for FSDP models by a shard-local central finite-difference operator,
\begin{equation}
	\label{eq:grad-fd-hvp}
	Hv \;\approx\;
	\frac{\nabla_\theta L(\theta+\varepsilon v) - \nabla_\theta L(\theta-\varepsilon v)}{2\varepsilon},
\end{equation}
which discretises the continuous operator
\(
Hv = \nabla_\theta(D_v L(\theta))
\),
with \(D_v L(\theta) = \langle \nabla_\theta L(\theta), v \rangle\).
The perturbations in \eqref{eq:grad-fd-hvp} are applied shard-locally to FSDP parameters, avoiding parameter gathers and preserving data parallelism.

\begin{theorem}[Error of gradient finite-difference HvPs]
	\label{thm:grad-fd-hvp}
	Let $L \in C^4(\mathbb{R}^n)$, let $\theta \in \mathbb{R}^n$, and let $v \in \mathbb{R}^n$ be a unit vector. Consider the gradient finite-difference estimator defined in \eqref{eq:grad-fd-hvp}, computed in floating-point arithmetic with machine precision $\varepsilon_{\mathrm{mach}}$.
	
	Then the approximation error satisfies
	\begin{equation}
		\|\widetilde{H}v - Hv\|
		\;\le\;
		\frac{\varepsilon^2}{6}\,
		\|\nabla_\theta(D_v^3 L(\theta))\|
		+
		O\!\left(
		\frac{\varepsilon_{\mathrm{mach}}}{\varepsilon}
		\|\nabla_\theta L(\theta)\|
		\right).
	\end{equation}
	Moreover, the error is minimised for
	\begin{equation}
		\varepsilon^\star
		\;\asymp\;
		\left(
		\frac{\varepsilon_{\mathrm{mach}}\,
			\|\nabla_\theta L(\theta)\|}
		{\|\nabla_\theta(D_v^3 L(\theta))\|}
		\right)^{1/3},
	\end{equation}
	at which the minimal achievable error scales as
	\begin{equation}
		\|\widetilde{H}v - Hv\|
		=
		O\!\left(
		\varepsilon_{\mathrm{mach}}^{2/3}
		\|\nabla_\theta L(\theta)\|^{2/3}
		\|\nabla_\theta(D_v^3 L(\theta))\|^{1/3}
		\right).
	\end{equation}
\end{theorem}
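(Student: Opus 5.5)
The plan is to split the error into a deterministic truncation term and a floating-point rounding term, bound each separately, and then balance them in $\varepsilon$. Write $\widetilde{H}v = \mathrm{fl}(\cdot)$ for the computed quotient. By the triangle inequality,
\[
\|\widetilde{H}v - Hv\| \le \Bigl\|\tfrac{g(\theta+\varepsilon v)-g(\theta-\varepsilon v)}{2\varepsilon} - Hv\Bigr\| + \Bigl\|\widetilde{H}v - \tfrac{g(\theta+\varepsilon v)-g(\theta-\varepsilon v)}{2\varepsilon}\Bigr\|,
\]
where $g=\nabla_\theta L$.

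\textbf{Truncation term.} I would work componentwise (or directionally) with the map $t\mapsto g(\theta+tv)$, which is $C^3$ since $L\in C^4$. Its derivatives along $v$ are $\tfrac{d^k}{dt^k}g(\theta+tv)=\nabla_\theta(D_v^k L)(\theta+tv)$: the first is $Hv$ and the third is $\nabla_\theta(D_v^3L)$.

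Taylor expanding to third order with remainder, the even-order terms cancel in the central difference. This gives
\[
\frac{g(\theta+\varepsilon v)-g(\theta-\varepsilon v)}{2\varepsilon}-Hv=\frac{\varepsilon^2}{12}\bigl[\nabla_\theta(D_v^3L)(\xi_+)+\nabla_\theta(D_v^3L)(\xi_-)\bigr]
\]
in integral-remainder form, which is bounded by $\tfrac{\varepsilon^2}{6}\sup\|\nabla_\theta(D_v^3L)\|$ on the segment. Continuity of the fourth derivatives lets me replace the supremum by the value at $\theta$ up to $o(\varepsilon^2)$. This yields the stated leading constant $\varepsilon^2/6$, interpreted as a leading-order bound. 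I would be explicit that the vector-valued mean value theorem fails, so the integral form of the remainder is used instead.

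\textbf{Rounding term.} This is the main obstacle, because it requires a modelling assumption about floating-point gradient evaluation. I would adopt the standard model: each computed gradient satisfies $\mathrm{fl}(g(\theta\pm\varepsilon v)) = g(\theta\pm\varepsilon v)+e_\pm$ with $\|e_\pm\|\le c\,\varepsilon_{\mathrm{mach}}\|g(\theta\pm\varepsilon v)\|$. The perturbation of $\theta$ itself contributes a relative error of the same order, which propagates through the Lipschitz gradient to lower-order effects. The subtraction and division add $O(\varepsilon_{\mathrm{mach}})$ relative error on an $O(\|Hv\|)$ quantity, which is negligible. Since $\|g(\theta\pm\varepsilon v)\|=\|g(\theta)\|+O(\varepsilon)$, the rounding error is $\le \tfrac{2c\,\varepsilon_{\mathrm{mach}}\|g(\theta)\|}{2\varepsilon}(1+O(\varepsilon))$, giving the $O(\varepsilon_{\mathrm{mach}}\|\nabla L\|/\varepsilon)$ term. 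I would state this assumption clearly in the proof, since the theorem's $O(\cdot)$ implicitly encodes it.

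\textbf{Optimisation.} Set $E(\varepsilon)=A\varepsilon^2+B/\varepsilon$ with $A=\tfrac16\|\nabla_\theta(D_v^3L)\|$ and $B\asymp\varepsilon_{\mathrm{mach}}\|\nabla L\|$. Solving $E'(\varepsilon)=0$ gives $\varepsilon^\star=(B/2A)^{1/3}\asymp(\varepsilon_{\mathrm{mach}}\|\nabla L\|/\|\nabla(D_v^3L)\|)^{1/3}$. Substituting back gives $E(\varepsilon^\star)=\tfrac32(2A)^{1/3}B^{2/3}\cdot 2^{-1/3}\asymp A^{1/3}B^{2/3}$, which is exactly the claimed $\varepsilon_{\mathrm{mach}}^{2/3}\|\nabla L\|^{2/3}\|\nabla(D_v^3L)\|^{1/3}$ scaling. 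The degenerate case $\nabla(D_v^3L)(\theta)=0$ should be handled separately by noting that the bound then holds for any $\varepsilon$ with the next-order term.
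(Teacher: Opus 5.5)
Your overall structure matches the paper's proof. There is a Taylor truncation term in which the even-order terms cancel, leaving $\tfrac{\varepsilon^2}{6}\nabla_\theta(D_v^3L)$. There is a rounding term obtained by assuming each computed gradient carries relative error $O(\varepsilon_{\mathrm{mach}})$, which the $1/(2\varepsilon)$ division amplifies. Finally, $A\varepsilon^2$ is balanced against $B/\varepsilon$. Your integral-remainder treatment is more careful than the paper's. The paper expands $\nabla_\theta L(\theta\pm\varepsilon v)$ with an $O(\varepsilon^4)$ remainder, which tacitly assumes more smoothness than $L\in C^4$ (e.g.\ $L\in C^5$). Your version uses exactly the stated hypothesis. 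It also makes explicit that the $\tfrac{\varepsilon^2}{6}$ bound holds only up to an $o(\varepsilon^2)$ correction, which is the sense in which both proofs deliver the displayed inequality.

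The one step that does not hold is your claim that rounding of the perturbed point is a lower-order effect. In floating point, $\mathrm{fl}(\theta_i\pm\varepsilon v_i)$ is rounded relative to $|\theta_i|$, not relative to $\varepsilon|v_i|$. So the evaluation points are off by $\delta_\pm$, with $\|\delta_\pm\|$ as large as order $\varepsilon_{\mathrm{mach}}\|\theta\|$. Through a gradient with Lipschitz constant $\Lambda$, this perturbs the difference quotient by approximately $H(\delta_+-\delta_-)/(2\varepsilon)$. That is of size up to $O(\varepsilon_{\mathrm{mach}}\Lambda\|\theta\|/\varepsilon)$: the same $1/\varepsilon$ order as the term you keep.

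This term is also not controlled by $\varepsilon_{\mathrm{mach}}\|\nabla_\theta L(\theta)\|/\varepsilon$; it persists at a stationary point, where the latter vanishes. It matters in practice here too. Algorithm~\ref{alg:fsdp-hvp} perturbs bf16 shards in place, and when $\varepsilon|v_i|$ falls below half the floating-point spacing at $\theta_i$, the perturbation is lost entirely.

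So you cannot derive the theorem's rounding term by this argument. You have two options:
\begin{itemize}
\item Do what the paper does implicitly and fold argument rounding into the modelling assumption. That is, posit that the computed gradients equal $\nabla_\theta L(\theta\pm\varepsilon v)+e_\pm$ at the exact perturbed points, with $\|e_\pm\|\lesssim\varepsilon_{\mathrm{mach}}\|\nabla_\theta L\|$.
\item Keep the term and take $B\asymp\varepsilon_{\mathrm{mach}}(\|\nabla_\theta L\|+\Lambda\|\theta\|)$. The scalings $\varepsilon^\star\propto B^{1/3}$ and error $\propto A^{1/3}B^{2/3}$ survive, but the constants in the statement change.
\end{itemize}

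A minor slip: $E(\varepsilon^\star)=3A(\varepsilon^\star)^2=\tfrac32(2A)^{1/3}B^{2/3}$, without your extra factor $2^{-1/3}$. This does not affect the $\asymp$ claim.
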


\begin{remark}[Practical step sizes]
	For float32 arithmetic, $\varepsilon_{\mathrm{mach}} \approx 1.2\times 10^{-7}$,
	yielding $\varepsilon^\star \sim 10^{-3}$  $10^{-2}$ and typical HvP errors of order
	$10^{-5}$. For bfloat16, $\varepsilon_{\mathrm{mach}} \approx 3.9\times 10^{-3}$,
	giving $\varepsilon^\star \sim 10^{-1}$ and HvP errors of order $10^{-2}$. These
	values are consistent with empirical stability observed in large-scale FSDP
	experiments.
\end{remark}

\begin{algorithm}[H]
	\caption{Shard-local finite-difference HvP under FSDP}
	\label{alg:fsdp-hvp}
	\begin{algorithmic}[1]
		\STATE \textbf{Input:} parameters $\theta$, direction $v$, step $\varepsilon$, dataloader
		\STATE \textbf{Output:} shard-local Hessian vector product $Hv$
		\STATE Perturb local DTensor shards: $\theta \leftarrow \theta + \varepsilon v$
		\STATE Zero gradients
		\FOR{batch $b$ in dataloader}
		\STATE Compute loss $\ell_b(\theta + \varepsilon v)$
		\STATE Accumulate $\nabla_\theta \ell_b \cdot w_b$ into \texttt{param.grad}
		\ENDFOR
		\STATE Perturb local shards: $\theta \leftarrow \theta - 2\varepsilon v$ \hfill // now at $\theta - \varepsilon v$
		\FOR{batch $b$ in dataloader}
		\STATE Compute loss $\ell_b(\theta - \varepsilon v)$
		\STATE Accumulate $-\nabla_\theta \ell_b \cdot w_b$ into \texttt{param.grad}
		\ENDFOR
		\STATE Restore parameters: $\theta \leftarrow \theta + \varepsilon v$
		\STATE Collect shard-local gradients into FP32 buffers, set \texttt{param.grad = None}
		\STATE Scale by $2\varepsilon \sum_b w_b$ to obtain local HvP result
		\STATE \textbf{return} shard-local tensors representing $Hv$
	\end{algorithmic}
\end{algorithm}

\paragraph{FSDP-native HVP primitive.}
Algorithm~\ref{alg:fsdp-hvp} implements a dataset-averaged finite-difference
Hessian  vector product under FSDP ZeRO-3. Parameter perturbations
$\theta \mapsto \theta \pm \varepsilon v$ are applied \emph{in place} on each
rank’s local DTensor shard, avoiding any parameter gathers.
Each HVP requires two standard FSDP gradient passes over the same dataloader
slice, and three shard-local AXPY updates to restore parameters.
The resulting gradients are accumulated locally and scaled by $(2\varepsilon)^{-1}$,
yielding shard-local pieces of $Hv$.
Beyond the communication already incurred by FSDP during backpropagation,
the only additional collective is at most a scalar all-reduce to normalize $v$
or form global dot-products in Krylov routines.

\subsection{Finite Difference as Local Averaging}

For a function $f : \mathbb{R}^n \to \mathbb{R}$, the central finite-difference
approximation to the directional second derivative along a unit vector $v$ is
\begin{align}
	H_\varepsilon(x)[v]
	&=
	\frac{
		f(x + \varepsilon v)
		-
		2 f(x)
		+
		f(x - \varepsilon v)
	}{
		\varepsilon^2
	}.
\end{align}

This approximation can be written exactly as a weighted average of the true
Hessian along the line $x + t v$.
\begin{figure*}[t]
	\centering
	\begin{subfigure}[t]{0.24\textwidth}
		\centering
		\includegraphics[width=\textwidth]{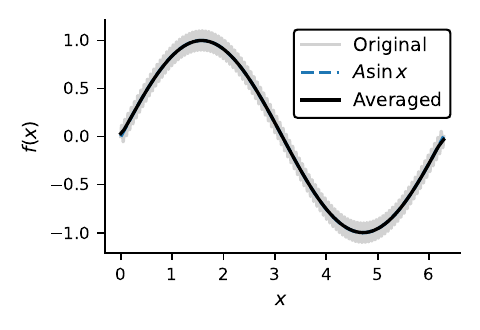}
		\caption{Local averaging in 1D.}
		\label{fig:avg1d}
	\end{subfigure}
	\hfill
	\begin{subfigure}[t]{0.24\textwidth}
		\centering
		\includegraphics[width=\textwidth]{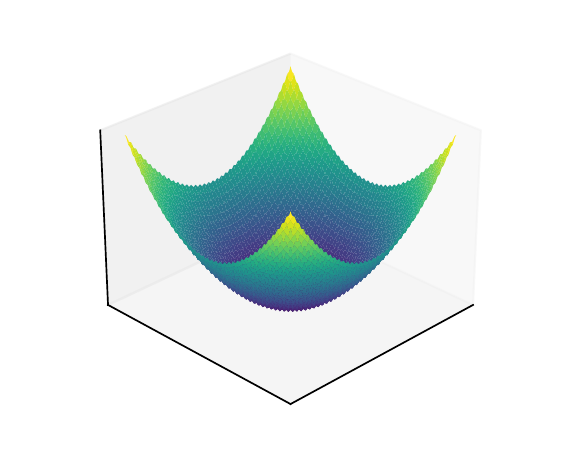}
		\caption{Convex rippled surface.}
		\label{fig:surface2d}
	\end{subfigure}
	\hfill
	\begin{subfigure}[t]{0.24\textwidth}
		\centering
		\includegraphics[width=\textwidth]{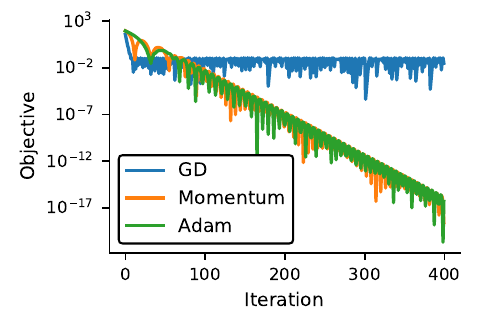}
		\caption{Optimiser convergence.}
		\label{fig:bestlr}
	\end{subfigure}
	\hfill
	\begin{subfigure}[t]{0.24\textwidth}
		\centering
		\includegraphics[width=\textwidth]{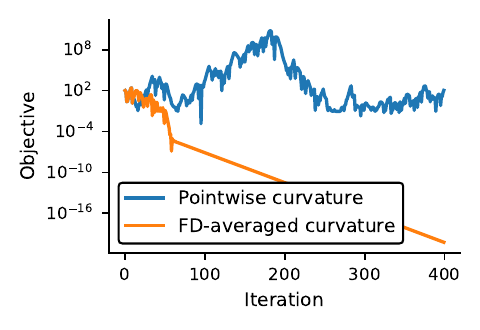}
		\caption{Adaptive Nesterov}
		\label{fig:nesterov}
	\end{subfigure}
	
	\caption{
		Local averaging suppresses high-frequency curvature without altering large-scale geometry.
		(a) Finite differences act as a local averaging operator on oscillatory signals.
		(b) A globally convex surface with high-frequency curvature.
		(c) Best achievable convergence under grid-searched learning rates for GD, momentum, and Adam for surface (b).
		(d) Adaptive Nesterov acceleration using pointwise versus finite-difference averaged curvature on surface (b).
	}
	\label{fig:local_averaging_optimization}
\end{figure*}

\begin{theorem}
	For $f \in C^2(\mathbb{R}^n)$,
	\begin{align}
		H_\varepsilon(x)[v]
		&=
		\frac{1}{\varepsilon}
		\int_{-\varepsilon}^{\varepsilon}
		w_\varepsilon(t)\;
		v^\top H(x + t v)\, v
		\; dt,
	\end{align}
	where
	\begin{align}
		w_\varepsilon(t)
		&=
		1
		-
		\frac{|t|}{\varepsilon},
	\end{align}
	and $H(x) = \nabla^2 f(x)$.
\end{theorem}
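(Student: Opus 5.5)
Reduce everything to one dimension. Set $g(t) = f(x + t v)$ for $t \in [-\varepsilon,\varepsilon]$. Since $f \in C^2(\mathbb{R}^n)$, the chain rule gives $g \in C^2$ with $g'(t) = \nabla f(x+tv)^\top v$ and $g''(t) = v^\top H(x+tv)\, v$. The numerator of $H_\varepsilon(x)[v]$ is then exactly $g(\varepsilon) - 2g(0) + g(-\varepsilon)$. The claim becomes the scalar identity
\begin{align}
g(\varepsilon) - 2g(0) + g(-\varepsilon) = \int_{-\varepsilon}^{\varepsilon} (\varepsilon - |t|)\, g''(t)\, dt,
\end{align}
because dividing both sides by $\varepsilon^2$ gives $\frac{1}{\varepsilon}\int_{-\varepsilon}^{\varepsilon} (1 - |t|/\varepsilon)\, g''(t)\,dt$, which is the stated formula with $w_\varepsilon$.

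To prove the scalar identity, I use Taylor's theorem with integral remainder about $0$, once in each direction:
\begin{align}
g(\varepsilon) &= g(0) + \varepsilon g'(0) + \int_0^{\varepsilon} (\varepsilon - t)\, g''(t)\, dt,\\
g(-\varepsilon) &= g(0) - \varepsilon g'(0) + \int_{-\varepsilon}^{0} (\varepsilon + t)\, g''(t)\, dt.
\end{align}
The second line follows from the first applied to $\tilde g(s) = g(-s)$, followed by the substitution $t = -s$. Adding the two lines cancels the first-order terms $\pm\varepsilon g'(0)$. On $[0,\varepsilon]$ we have $\varepsilon - t = \varepsilon - |t|$, and on $[-\varepsilon,0]$ we have $\varepsilon + t = \varepsilon - |t|$. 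The two integrals therefore merge into the single integral $\int_{-\varepsilon}^{\varepsilon}(\varepsilon - |t|)\, g''(t)\, dt$.

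Alternatively, one can integrate by parts twice against the hat function $\varepsilon - |t|$. Its distributional second derivative is $\delta_{-\varepsilon} - 2\delta_0 + \delta_{\varepsilon}$, and the boundary terms vanish because the hat function is zero at $\pm\varepsilon$. This viewpoint also explains the "local averaging" interpretation: $w_\varepsilon$ is nonnegative and $\frac{1}{\varepsilon}\int_{-\varepsilon}^{\varepsilon} w_\varepsilon(t)\,dt = 1$, so the kernel is a probability density.

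No step is a real obstacle. The only point needing care is justifying the integral form of the Taylor remainder, which requires $g''$ to be continuous and hence integrable. This holds under the $C^2$ hypothesis, because $g''(t) = v^\top H(x+tv)\,v$ is a composition of continuous maps. I would also note that $v$ need not be a unit vector for the identity itself; unit norm matters only for interpreting $\varepsilon$ as a step length.
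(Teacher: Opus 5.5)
Your proof is correct and follows essentially the same route as the paper: integral-form Taylor expansions of $f(x\pm\varepsilon v)$ about $x$, cancellation of the first-order terms $\pm\varepsilon\nabla f(x)^\top v$, and merging the two remainders into a single integral against the kernel $\varepsilon-|t|$ before dividing by $\varepsilon^2$. Your reduction to $g(t)=f(x+tv)$ only changes notation. The paper writes its second remainder over $[0,\varepsilon]$ with $H(x-tv)$, which is your $[-\varepsilon,0]$ integral after the substitution $t\mapsto -t$.
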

See App.~\ref{sec:localavg} for proof. The weighting function $w_\varepsilon$ is nonnegative, symmetric, and supported
on $[-\varepsilon,\varepsilon]$, with
$
	\frac{1}{\varepsilon}
	\int_{-\varepsilon}^{\varepsilon}
	w_\varepsilon(t)\,dt
	= 1.
$
It attains its maximum at $t=0$ and decays linearly to zero at the boundary,
corresponding to a triangular averaging kernel. 

As shown in Figure \ref{fig:avg1d}  the function \[
f(x) = A \sin x + B \sin(\omega x),
\] or (Figure \ref{fig:surface2d}) its higher dimensional equivalent 
\[
f(x,y) = \frac{1}{2}(x^2 + y^2) + B \sin(\omega x)\sin(\omega y),
\]
where $w\gg 1$. Such functions locally have large curvature but are globally smooth. Such a surface exhibits known tendencies in deep learning such as the improved performance of momentum SGD and Adam over SGD (Figure \ref{fig:bestlr}). We find that for this toy problem, setting the per iteration learning rates and Nesterov \citep{nesterov1983method} momentum
\[
\alpha = \frac{1}{L},
\qquad
\beta = \frac{\sqrt{L} - \sqrt{m}}{\sqrt{L} + \sqrt{m}}.
\]
We find that only pointwise curvature (Figure \ref{fig:nesterov}) converges.
\section{(Stochastic) Lanczos (Quadrature)}
\label{sec:lanczos}
The Lanczos algorithm approximates eigenvalues of a symmetric matrix via the recurrence
\begin{equation}
	\beta_{k+1} q_{k+1} = H q_k - \alpha_k q_k - \beta_k q_{k-1}, \thinspace
	\alpha_k = q_k^\top H q_k.
\end{equation}
After $m$ steps, $H$ is approximated by a tridiagonal $T_m$, whose Ritz values approximate extremal eigenvalues. In finite precision, loss of orthogonality yields ghost eigenvalues \citep{paige1971}. With random probes, Lanczos enables stochastic Lanczos quadrature (SLQ) \citep{ubaru2017}, estimating
\begin{equation}
	v^\top f(H) v \;\approx\; e_1^\top f(T_m) e_1,
\end{equation}
recovering the spectral density via polynomial moments encoded by the Krylov process. Despite extensive use at moderate scale, SLQ had not been demonstrated at LLM scale due to the cost of repeated HvPs. Our shard-local HvP enables SLQ under FSDP. We quantify how the Lanczos algorithm adapts under finite difference Hessian vector products with the following theorem.
\begin{theorem}[Error of Lanczos with finite-difference Hessian  vector products]
	\label{theorem:lanczosfindif}
	Let $H = \nabla^2 f(x) \in \mathbb{R}^{P \times P}$ be symmetric, and let $\tilde T_m$ be the tridiagonal matrix produced by $m$ steps of the symmetric Lanczos algorithm applied to $H$, where Hessian  vector products are approximated using the central finite-difference estimator of Theorem~\ref{thm:grad-fd-hvp} with step size $\varepsilon^\star$.
	
	Assume the finite-difference errors are unbiased and independent across Lanczos iterations. Then the computed tridiagonal satisfies
	\[
	\tilde T_m = T_m + \Delta T_m,
	\]
	where the perturbation obeys
	\[
	\mathbb{E}\|\Delta T_m\|_2
	\;\lesssim\;
	\|H\|_2\,
	\sigma_f^{1/2}\,\|D^4 f\|^{1/2}
	\sqrt{\frac{m\,\bar\eta}{P}}.
	\]
	
	Consequently, the Ritz values $\tilde\lambda_i$ satisfy
	\[
	|\tilde\lambda_i - \lambda_i|
	\;\lesssim\;
	\|H\|_2\,
	\sigma_f^{1/2}\,\|D^4 f\|^{1/2}
	\sqrt{\frac{m\,\bar\eta}{P}}
	\quad\text{in expectation}.
	\]
	
	Moreover, loss of orthogonality among Lanczos vectors induces spurious duplicate (“ghost”) eigenvalues with RMS splitting
	\[
	\Delta\lambda_{\mathrm{ghost}}
	\;\sim\;
	2\,\|H\|_2\,
	\sigma_f^{1/2}\,\|D^4 f\|^{1/2}
	\sqrt{\frac{k\,\bar\eta}{P}},
	\]
	where $k$ denotes the number of Lanczos iterations after convergence of the associated Ritz vector.
\end{theorem}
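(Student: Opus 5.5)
The plan is to treat the finite-difference HvP as an exact matvec plus a random perturbation, $\widetilde{H}q_k = Hq_k + e_k$. We then propagate $\{e_k\}$ through the Lanczos recurrence with standard backward-error arguments of Paige type, and finally convert the perturbation of $T_m$ into Ritz-value and ghost-splitting bounds via Weyl's inequality. Symbols not defined earlier are interpreted as follows. $\sigma_f$ is the scale of the floating-point noise in gradient evaluations, playing the role of $\varepsilon_{\mathrm{mach}}\|\nabla_\theta L\|$ in Theorem~\ref{thm:grad-fd-hvp}. $\|D^4 f\|$ bounds the fourth derivative, replacing $\|\nabla_\theta(D_v^3 L)\|$. $\bar\eta$ is an averaged per-coordinate noise factor. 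With these conventions we adopt the heuristic, $\lesssim$-level reading the statement invites.

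\textbf{Step 1 (per-step error).} Take Theorem~\ref{thm:grad-fd-hvp} with step $\varepsilon^\star$. The per-HvP error has the balanced form $\|e_k\| \lesssim \varepsilon_{\mathrm{mach}}^{2/3}\|\nabla L\|^{2/3}\|D^4f\|^{1/3}$. Under the stated assumptions (unbiased, independent across iterations) we instead model $e_k$ as a zero-mean random vector whose squared norm spreads evenly over the $P$ coordinates. Its variance is controlled by the product of the truncation and rounding terms at the optimum, giving $\mathbb{E}\|e_k\|^2 \lesssim \sigma_f\|D^4 f\|\,\bar\eta$.

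\textbf{Step 2 (tridiagonal perturbation).} Write the perturbed recurrence as $\widetilde{H}Q_m = Q_m\tilde T_m + \tilde\beta_{m+1}q_{m+1}e_m^\top$, so that $\Delta T_m$ collects the entries $q_k^\top e_k$ (for $\alpha_k$) and the induced changes in $\beta_{k+1}=\|r_k\|$.
\begin{itemize}
\item Since $q_k$ is a unit vector and $e_k$ is isotropic, each projection $q_k^\top e_k$ has variance of order $\mathbb{E}\|e_k\|^2/P$. This is the source of the $1/P$ factor.
\item Perturbations of $\beta_k$ are first-order in the component of $e_k$ along $r_k/\|r_k\|$, so they behave similarly.
\item $\Delta T_m$ is then a symmetric tridiagonal matrix with independent zero-mean entries. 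Its spectral norm is at most a constant times its largest row norm. Bounding this via Jensen by the Frobenius norm gives $\mathbb{E}\|\Delta T_m\|_2 \lesssim \sqrt{m\,\mathbb{E}\|e\|^2/P}$.
\end{itemize}
The factor $\|H\|_2$ enters through relative scaling: the entries are normalised by $\beta_k\le\|H\|_2$, and the gradient norm is controlled by curvature along the Krylov space.

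\textbf{Step 3 (Ritz values).} Weyl's inequality gives $|\tilde\lambda_i-\lambda_i|\le\|\Delta T_m\|_2$, and taking expectations yields the second claim. For ghosts we follow Paige's analysis. Once a Ritz vector $y$ has converged, every later step injects a component of size about $|y^\top \Delta T|$ back along $y$. After $k$ further iterations these injections accumulate as a random walk of size $\sqrt{k}$ times the per-step noise, which reintroduces a copy of the eigenvalue. The two copies are each displaced by roughly this amount in opposite directions, which accounts for the factor $2$ in the RMS splitting.

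\textbf{Main obstacle.} The hardest step is Step 2. The Lanczos vectors $q_k$ depend on past noise, so $q_k^\top e_k$ are not truly independent of the basis. Moreover, loss of orthogonality means the exact-arithmetic $T_m$ is not the right comparison object. To handle this I would first condition on the filtration generated by $e_1,\dots,e_{k-1}$, making $q_k$ measurable so that $q_k^\top e_k$ is a martingale difference. A martingale version of the Frobenius bound then recovers the $\sqrt{m}$ scaling. Finally I would invoke Paige's theorem to relate $\tilde T_m$ to a Lanczos process run on a nearby matrix, absorbing higher-order terms into $\lesssim$.
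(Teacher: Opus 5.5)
Your Step~1 does not produce the factor $\sigma_f^{1/2}\|D^4 f\|^{1/2}$, and that factor is exactly the part of the statement the paper actually proves. Take your own dictionary, $\sigma_f\leftrightarrow\varepsilon_{\mathrm{mach}}\|\nabla_\theta L\|$ and $\|D^4 f\|\leftrightarrow\|\nabla_\theta(D_v^3 L)\|$. Then the gradient-difference estimator of Theorem~\ref{thm:grad-fd-hvp} has truncation error $\propto\varepsilon^2\|D^4 f\|$ and rounding error $\propto\sigma_f/\varepsilon$. The product of the two at $\varepsilon^\star$ is $\varepsilon^\star\sigma_f\|D^4 f\|\asymp\sigma_f^{4/3}\|D^4 f\|^{2/3}$, which is the square of the balanced error you wrote yourself, not $\sigma_f\|D^4 f\|$. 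The two differ by the factor $\varepsilon^\star$, so $\mathbb{E}\|e_k\|^2\lesssim\sigma_f\|D^4 f\|\,\bar\eta$ is asserted rather than derived.

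The paper gets the exponents $\tfrac12,\tfrac12$ from a different estimator: the function-value second difference $H_\varepsilon(x)[v]$, with $\sigma_f^2$ the variance of independent noise in \emph{function} evaluations. The argument uses the triangular-kernel representation:
\begin{itemize}
\item the odd Taylor term vanishes by symmetry of $w_\varepsilon$;
\item the kernel's second moment $\varepsilon^2/6$ gives bias $\tfrac{\varepsilon^2}{12}D^4 f(x)[v^{\otimes 4}]$;
\item the noise $(\eta(x+\varepsilon v)-2\eta(x)+\eta(x-\varepsilon v))/\varepsilon^2$ has variance $6\sigma_f^2/\varepsilon^4$.
\end{itemize}
Since bias and noise scale as $\varepsilon^{2}$ and $\varepsilon^{-2}$, their product is $\varepsilon$-independent. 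Balancing at $\varepsilon\asymp(\sigma_f/\|D^4 f\|)^{1/4}$ then gives error $\asymp(\sigma_f\|D^4 f\|)^{1/2}$. The paper treats this as an effective rounding unit and substitutes it into Paige's classical finite-precision Lanczos analysis; that is the content of the remark following the statement. There is no explicit propagation step.

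The $\|H\|_2$ factor is also not accounted for. Your Step~2 yields $\mathbb{E}\|\Delta T_m\|_2\lesssim\sqrt{m\,\mathbb{E}\|e\|^2/P}$ with no $\|H\|_2$, and neither of your justifications supplies one. Normalising by $\beta_k$ introduces $1/\beta_k$, and $\beta_k\le\|H\|_2$ bounds that from below rather than above. Controlling $\|\nabla_\theta L\|$ by curvature would at best trade part of $\sigma_f^{1/2}$ for something like $\|H\|_2^{1/2}$, not multiply by $\|H\|_2$. In the Paige reading, $\|H\|_2$ appears because the matvec error is modelled as relative, $\|e_k\|\lesssim\|H\|_2\,\epsilon_{\mathrm{eff}}$, exactly as rounding errors are; you would need to adopt that model explicitly.

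Similarly, the paper's remark defines $\bar\eta=(\mathbb{E}\|q\|_\infty^2)^{-1}$, a delocalisation measure of the Lanczos vectors. You instead place $\bar\eta$ inside $\mathbb{E}\|e_k\|^2$ by fiat and get $1/P$ from isotropy of $e_k$, which involves no $\|q\|_\infty$.

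Your Steps~2--3 (martingale conditioning, Frobenius bound, Weyl, random-walk picture of ghosts) are more explicit than anything in the paper. Note, though, that the entries $q_k^\top e_k$ compare $\tilde T_m$ with the Rayleigh quotient on the computed basis, not with the exact-arithmetic $T_m$. As you anticipate, bridging that requires a Paige-type backward argument, which is also where the paper's own argument rests.
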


\begin{remark}
	When the finite-difference step size is chosen optimally as in Theorem~\ref{thm:grad-fd-hvp}, we reduce to Paiges classic result.
	The quantity
	\[
	\bar\eta
	\;\equiv\;
	\bigl(\mathbb{E}\|q\|_\infty^2\bigr)^{-1},
	\]
	where $q$ denotes a typical Lanczos vector, measures the effective delocalisation of Krylov basis vectors. 
\end{remark}

\begin{table*}[!h]
	\centering
	\small
	\begin{tabular}{l r r l l r r r}
		\hline
		Model & Batch & Steps & Reorth & Subsample & $T_{\mathrm{HvP}}$ (s) & $T_{\mathrm{Lanczos}}$ (s) & Overhead \\
		\hline
		DeepSeek-R1-Distill-Qwen-7B & 1 & 1 & False & 1/1000 & 23.99 & 24.30 & $\approx$1.3\% \\
		Qwen-1.5B                  & 2 & 8 & True  & 1/200  & 71.26 & 72.53 & $\approx$1.8\% \\
		\hline
	\end{tabular}
	\vspace{3pt}
	\caption{Lanczos overhead relative to the HVP primitive.}
	\label{tab:lanczos-overhead}
\end{table*}

\section{Systems Characterization: Cost, Scaling, and Baselines}
\label{sec:cost-model}
Let $T_{\mathrm{grad}}$ denote the wall-clock time for a single distributed FSDP gradient evaluation: one forward and one backward pass over the dataloader slice, including FSDP parameter all-gathers and gradient reduce-scatter/all-reduce collectives, but no optimizer update.
In a standard $(\alpha,\beta,\gamma)$ model, per rank:
\begin{equation*}
	\label{eq:tgrad}
	T_{\mathrm{grad}}
	\;=\;
	F_{\mathrm{fwd}}\,\gamma
	\;+\;
	F_{\mathrm{bwd}}\,\gamma
	\;+\;
	\alpha\, G_{\mathrm{grad}}
	\;+\;
	\beta\, V_{\mathrm{grad}},
\end{equation*}
where $F_{\mathrm{fwd}},F_{\mathrm{bwd}}$ are forward/backward FLOPs, $G_{\mathrm{grad}}$ is the number of collectives invoked by FSDP, and $V_{\mathrm{grad}}$ is the total communicated payload per rank.

\paragraph{Cost of one dataset-averaged HVP.}
All additional work outside the two FSDP passes consists of linear-time vector operations over $P_{\mathrm{loc}}$ parameters (AXPYs, copies, scaling) plus at most one scalar all-reduce. Grouping these into $T_{\mathrm{vec}}$ yields:
\begin{equation*}
	\label{eq:thvp}
	T_{\mathrm{HvP}}
	\;=\;
	2\,T_{\mathrm{grad}}
	\;+\;
	T_{\mathrm{vec}},
	\thinspace
	T_{\mathrm{vec}} = \mathcal{O}(P_{\mathrm{loc}}\,\gamma) + \mathcal{O}(T_{\mathrm{scalar}}),
\end{equation*}
where $T_{\mathrm{scalar}}$ denotes the cost of a scalar all-reduce.
Importantly, the HVP has \emph{exactly} the same FSDP communication pattern as two gradient evaluations, and introduces no additional $O(P)$-sized collectives (no full-parameter all-gathers beyond those already implied by FSDP execution).

\paragraph{Cost of one Lanczos iteration.}
Each Lanczos step requires one Hessian  vector product, a constant number of
global scalar reductions to form recurrence coefficients, and shard-local
vector operations.
With sliding-window reorthogonalisation of size $r$, this incurs at most
$(2+r)$ scalar all-reduces and $O((1+r)P_{\mathrm{loc}})$ local AXPY operations.
Thus one iteration costs
\begin{equation*}
	T_{\mathrm{Lanczos\ iter}}(r)
	\;=\;
	T_{\mathrm{HvP}}
	\;+\;
	(2+r)\,T_{\mathrm{scalar}}
	\;+\;
	O\!\left((1+r)\frac{P}{R}\gamma\right).
\end{equation*}

\paragraph{End-to-end SLQ/CG complexity (for context).}
For stochastic Lanczos quadrature (SLQ) with $m$ Krylov steps and $s$ random probe vectors, the total time is
\begin{equation*}
	T_{\mathrm{SLQ}} \approx s\,m\,T_{\mathrm{Lanczos\ iter}}(r) + T_{\mathrm{post}},
\end{equation*}
where $T_{\mathrm{post}}$ (tridiagonal eigendecompositions and quadrature) is typically negligible compared to the $s m$ HVP calls at scale. A CG-based inverse-HVP solver similarly reduces to repeated HVP calls plus scalar reductions and shard-local vector operations, and therefore inherits the same cost structure.

\subsection{Empirical scaling}
Our HVP inherits the asymptotic scaling of a standard FSDP gradient evaluation. In particular, when a model \emph{fits} on a single rank, FSDP ZeRO-3 can be strictly slower than standard data parallelism (DP) because ZeRO-3 trades memory savings for more frequent communication. We formalise this in Appendix \ref{sec:fsdpvsdp}. We display single node scaling results in Table \ref{tab:strong-scaling-single-node}
\begin{table}[!h]
	\centering
	\small
	\begin{tabular}{r r r r r }
		\hline
		\#GPUs & Avg t &  Std & Peak mem & Speedup \\
		\hline
		1 & 12.40s & 0.05s & 67.63GB &  1.00$\times$ \\
		2 & 7.93s  & 0.04s & 68.74GB &  1.56$\times$ \\
		4 & 4.06s  & 0.10s & 67.22GB &  3.05$\times$ \\
		8 & 2.16s  & 0.03s & 66.51GB &  5.74$\times$ \\
		\hline
	\end{tabular}
	\vspace{3pt}
	\caption{Strong scaling of the HVP primitive on one node. Batch size $160$ on Qwen $0.6$B, sequence length $1024$ on wikitext-$2$.}
	\vspace{-15pt}
	\label{tab:strong-scaling-single-node}
\end{table}
and multi node scaling results in Table \ref{tab:multi-node-scaling}.
\begin{table}[!h]
	\centering
	\small
	\begin{tabular}{r r r r r r}
		\hline
		\#Nodes & B & Avg t & Std & Peak mem & Speedup \\
		\hline
		1  & 64 & 9.36s & 0.07s & 62.70GB & 1.00$\times$ \\
		2 & 72 & 5.20s & 0.06s & 67.31GB & 1.80$\times$ \\
		\hline
	\end{tabular}
	\vspace{3pt}
	\caption{Multi-node scaling of the HVP primitive. $B$ denotes batch size and GPU count is $8 \times$ Nodes. Qwen $32$B on wikitext-$2$ sequence length $1024$.}
	\label{tab:multi-node-scaling}
\end{table}
As shown in Table \ref{tab:hessformer-compare}, for the one available multi-GPU hessian calculation method \citep{granziol2025hessformer} we see significant ($4$x) speedups, justifying our contribution at a cost per experiment level.
\begin{table}[!h]
	\centering
	\small
	\begin{tabular}{l r r r r}
		\hline
		Method & Batch & GPUs & Avg time & Peak mem\\
		\hline
		HessFormer & 32 & 8 & 89.57s & 71.91GB \\
		FSHP      & 80 & 8 & 22.21s & 74.71GB \\
		\hline
	\end{tabular}
	\vspace{3pt}
	\caption{Single-node comparison on a 32B Qwen model.}
	\label{tab:hessformer-compare}
\end{table}
\paragraph{Empirical Lanczos overhead}Lanczos adds only shard-local vector operations and scalar reductions on top of two gradient passes. This is a small overall overhead relative to the hessian vector product in typical setups, we confirm this experimentally in Table \ref{tab:lanczos-overhead}.
\section{Exp I: Block-diagonal Hypothesis Test}
\label{sec:block-diagonal}

To illustrate analyses enabled by our primitive, we test the common assumption that the Hessian is approximately block diagonal. In optimisation, natural-gradient methods replace the intractable full Fisher Information Matrix with explicit layer-wise block-diagonal approximations, most notably through Kronecker-factored approaches such as K-FAC \citep{amari1998natural,martens2015kfac}. In pruning, while early second-order methods assumed a diagonal Hessian \citep{lecun1990optimal}, subsequent work showed that full-Hessian formulations are computationally infeasible beyond small models \citep{hassibi1993optimal}, motivating block-wise and layer-wise approximations in practice. In large-scale post-training quantisation, block-diagonal structure is enforced by design: methods such as GPTQ optimise second-order objectives independently over small parameter blocks using local Hessian approximations, discarding cross-block interactions entirely \citep{frantar2022gptq}.

To evaluate the validity of a block-diagonal approximation of the Hessian in transformer layer space, we test whether cross-block curvature terms are negligible in practice. Let $\theta \in \mathbb{R}^P$ denote the flattened parameter vector, partitioned into contiguous blocks $\{\theta^{(b)}\}$ corresponding to individual transformer layers. For a randomly sampled probe vector $v \sim \mathcal{N}(0, I)$ with $\|v\|_2 = 1$, we compute the full Hessian  vector product
\[
h_{\mathrm{full}} = H v,
\]
using a finite-difference estimator over minibatches. For each block $b$, we construct a masked probe $v^{(b)}$ by zeroing all components of $v$ outside block $b$, and compute the corresponding masked product $h_{\mathrm{masked}}^{(b)} = H v^{(b)}$.

We then restrict both products to the same block and compare $(H v)^{(b)}$ with $(H v^{(b)})^{(b)}$, which isolates the contribution of cross-block Hessian terms. Deviations are quantified via the relative error
\[
\frac{\| (H v)^{(b)} - (H v^{(b)})^{(b)} \|}
{\| (H v)^{(b)} \|},
\]
and cosine similarity between the two vectors. If the Hessian were approximately block-diagonal, these quantities would be close to zero and one, respectively.

\subsection{Model on generic Data}
For Qwen-0.6B on WikiText, parameters are split into 28 blocks. For a probe vector $v$, we compare $Hv$ to a block-diagonal proxy $H_{\mathrm{block}}v$, the results shown in Table \ref{tab:block-diagonal}. Here we see, that for a general LLM on a general dataset we have an $O(1)$ magnitude error and some but not overwhelming directional alignment as measured by cosine similarity.
\begin{table}[!h]
	\centering
	\small
	\begin{tabular}{l r}
		\hline
		Metric & Mean $\pm$ Std Dev \\
		\hline
		$\|H_{\mathrm{block}}v - Hv\|$ & $0.082 \pm 0.054$ \\
		Relative error & $0.941 \pm 0.059$ \\
		Cosine similarity & $0.311 \pm 0.120$ \\
		Blocks & 28 \\
		\hline
	\end{tabular}
	\vspace{3pt}
	\caption{Block-diagonal approximation error on Qwen-0.6B.}
	\label{tab:block-diagonal}
\end{table}
\subsection{Code generation fine-tuning experiment}
We fine-tune a causal language model on a coding dataset using a standard autoregressive objective, with loss applied only to the solution tokens. Each example concatenates a prompt and its corresponding solution, but prompt tokens are masked in the loss so that gradients arise solely from conditional solution generation. Writing $x = (x_{\mathrm{prompt}}, x_{\mathrm{sol}})$ for the full token sequence, the training objective is
\[
\mathcal{L}(\theta)
=
\sum_{t \in x_{\mathrm{sol}}}
\ell\big(f_\theta(x_{\le t}), x_{t+1}\big),
\]
which concentrates curvature on parameters involved in mapping prompts to solutions rather than prompt encoding alone. Sequences are truncated to a fixed maximum length and processed independently. Optimization is performed with AdamW under distributed training using \texttt{Accelerate}, with optional gradient checkpointing and mixed-precision arithmetic. The model remains in training mode throughout, ensuring that subsequent Hessian vector products correspond to the true training objective rather than an evaluation-time surrogate, and thus reflect the second-order structure encountered by the optimizer.

Empirically, as shown in Table \ref{tab:block-diagonal-summary} for Deepseek $1.3$Bn coder, in we observe large relative errors and near-zero mean cosine similarity across most blocks, indicating strong cross-layer coupling in the curvature. The only notable exception occurs in the final transformer block, where the cosine similarity is substantially higher, consistent with the reduced downstream mixing and MLP-dominated structure of the last layer.

\begin{table}[!h]
	\centering
	\small
	\begin{tabular}{l r}
		\hline
		Metric & Mean $\pm$ Std Dev \\
		\hline
		$\|H_{\mathrm{block}}v - Hv\|$ & $0.00442 \pm 0.00444$ \\
		Relative error & $1.214 \pm 0.150$ \\
		Cosine similarity & $0.00443 \pm 0.127$ \\
		Blocks & 24 \\
		\hline
	\end{tabular}
	\vspace{3pt}
	\caption{
		Block-diagonal approximation error (per-block statistics).
	}
	\label{tab:block-diagonal-summary}
\end{table}

\section{Exp II: Hessian Spectra}
 Following prior spectral studies \citep{ghorbani2019investigation,papyan2019measurements,ubaru2017}, we focus on estimating the spectral density of the Hessian rather than computing individual eigenvalues. To the best of our knowledge, Figure \ref{fig:gptoss120b} is the first spectral density estimate for causal language models at $100$Bn parameter scale.
 \begin{figure}[!h]
 	\centering
 	\includegraphics[width=\linewidth, trim=0 0 0 20, clip]{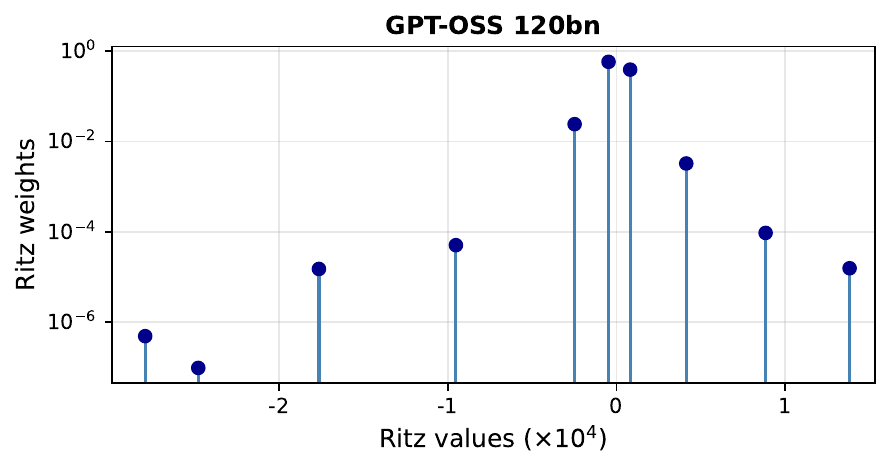}
 	\caption{Spectral Density of OpenAI $120$Bn parameter open weight model on wikitext-$2$, sequence length $64$, $\epsilon=10^{-4}$.}
 	\label{fig:gptoss120b}
 \end{figure}
The extremely large negative eigenvalues indicate that the data as tokenised has clearly not been memorised (there are directions of steep descent available). This may provide evidence to the problem of hallucination. Recollection of non memorised data will by definition be lossy. However since we do not know the details of OpenAIs $120$bn model \citep{openai2025gptoss120bgptoss20bmodel} training regimen and dataset, the plot is only indicative. We also do not have the resources to run several tests with varying dataset sizes to understand the magnitude of broadening \citep{granziol2019learningrate}. Do to the model size, only the essential Lanczos vectors were stored (not the entire krylov subspace) and no re-orthogonalisation was performed. Due to the small number of iterations ($10$) and using Theorems \ref{theorem:lanczosfindif} \& \ref{thm:grad-fd-hvp} we do not see the obvious arthefacts of low precision arithmetic distorting our computations, indicating correct analysis within the scale of computation.

\subsection{Lanczos vectors precision, $\epsilon$ and sub-sampling}
Given that our method requires storing the full Krylov subspace in memory\footnote{Future work could offload portions of the subspace to CPU or disk to reduce memory pressure and enable larger-scale runs.}, we study the numerical precision required to obtain stable Hessian spectral estimates, as well as the sensitivity of the spectrum to the smoothing parameter \(\epsilon\).

\begin{figure}[!h]
	\centering
	\includegraphics[width=\linewidth]{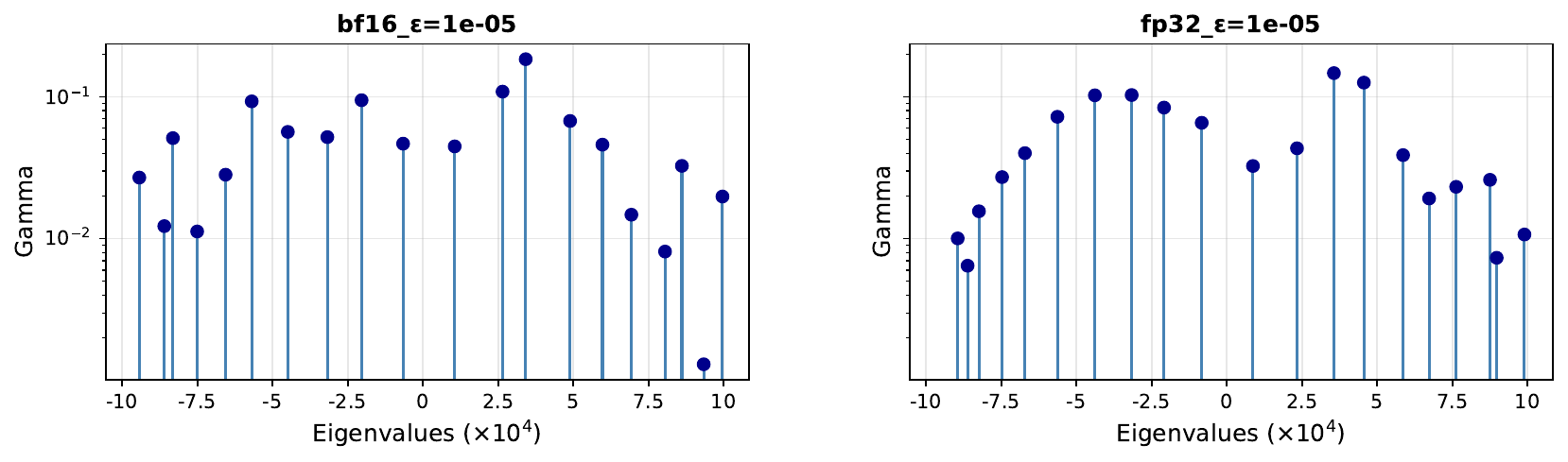}
	\caption{Pairwise structure for $\varepsilon = 10^{-5} \ll \varepsilon_{\mathrm{opt}}$. Matrix spectra resembles that of a random matrix with no clear outliar seperation.}
	\label{fig:pairs_eps_1em05}
\end{figure}

\begin{figure}[!h]
	\centering
	\includegraphics[width=\linewidth]{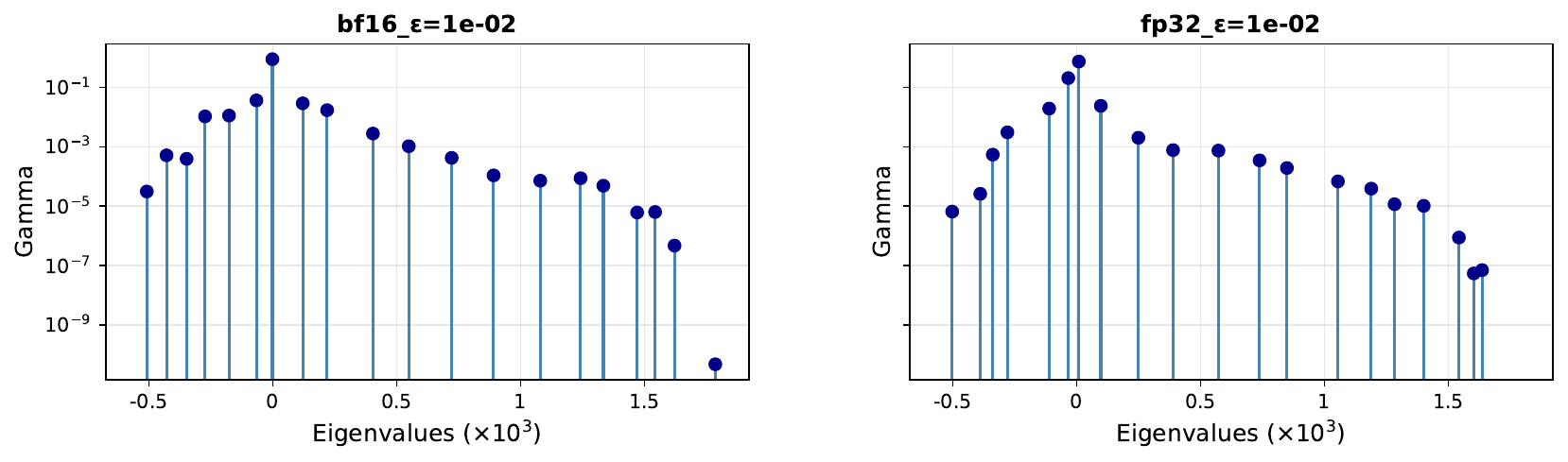}
	\caption{Pairwise structure for $\varepsilon = 10^{-2}  \gg \varepsilon_{\mathrm{opt}}$. Matrix spectra resembles that of a random matrix with no clear outliar seperatioon.}
	\label{fig:pairs_eps_1em02}
\end{figure}

We evaluate these effects using the Pythia-160M model trained on the Pile dataset, which we treat as an in-distribution setting for Hessian estimation. Experiments are conducted on a \(1\%\) subset of the Pile (1k examples drawn from a 100k-row deduplicated subset). All Lanczos scalar operations  -including inner products and entries of the tridiagonal \(T\) matrix  -are performed in float32, while only the Lanczos basis vectors are stored in either bfloat16 or float32. We perform a grid search over \(\epsilon\), apply full re-orthogonalisation at every Lanczos iteration, and retain the entire Krylov subspace.


We use a sequence length of 2048 tokens, padding each batch to the maximum sequence length (capped at 2048) using \texttt{pad\_token\_id} (or \texttt{eos\_token\_id} when unset). Labels are padded with \texttt{-100}, and attention masks assign ones to real tokens and zeros to padding.

Comparing bfloat16 and float32 Lanczos vectors on the same axes, we observe only minimal deviations between the resulting spectra, indicating that bfloat16 storage is sufficient for faithful spectral estimation. Clear Hessian structure, manifested as isolated spectral outliers, emerges for \(10^{-4} \le \epsilon \le 10^{-3}\). Around \(\epsilon \approx 10^{-3}\), modest differences appear: bfloat16 exhibits mild outlier broadening.


%

\begin{figure}[!h]
	\centering
	\includegraphics[width=\linewidth]{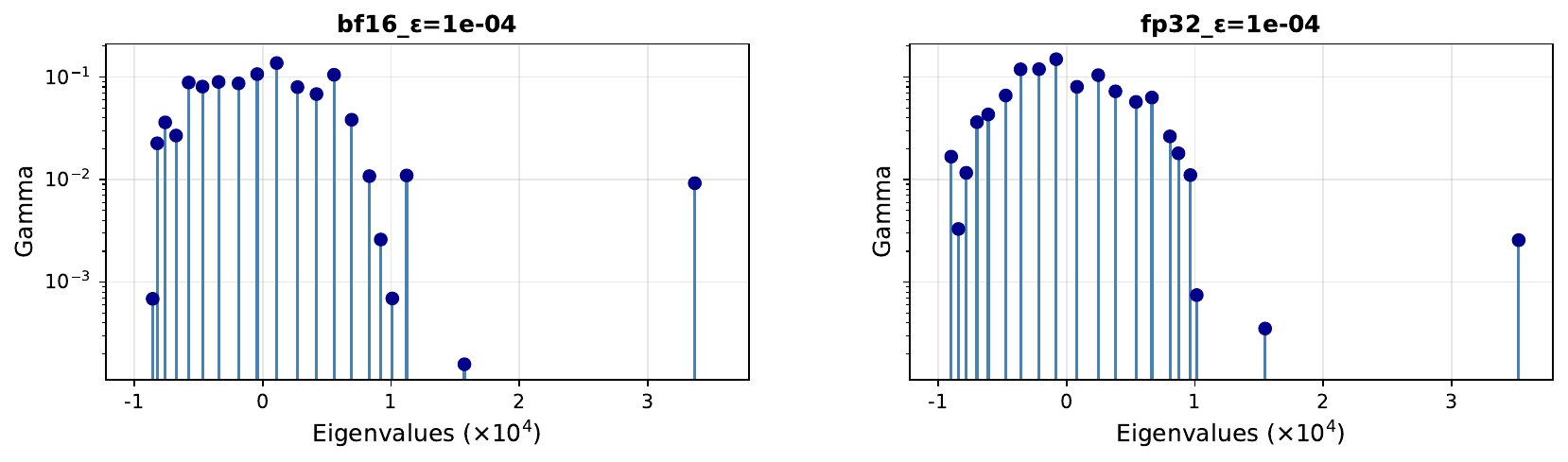}
	\caption{Pairwise structure for $\varepsilon = 10^{-4} \approx  \varepsilon_{\mathrm{opt}}$.}
	\label{fig:pairs_eps_1em04}
\end{figure}


\begin{figure}[!h]
	\centering
	\includegraphics[width=\linewidth]{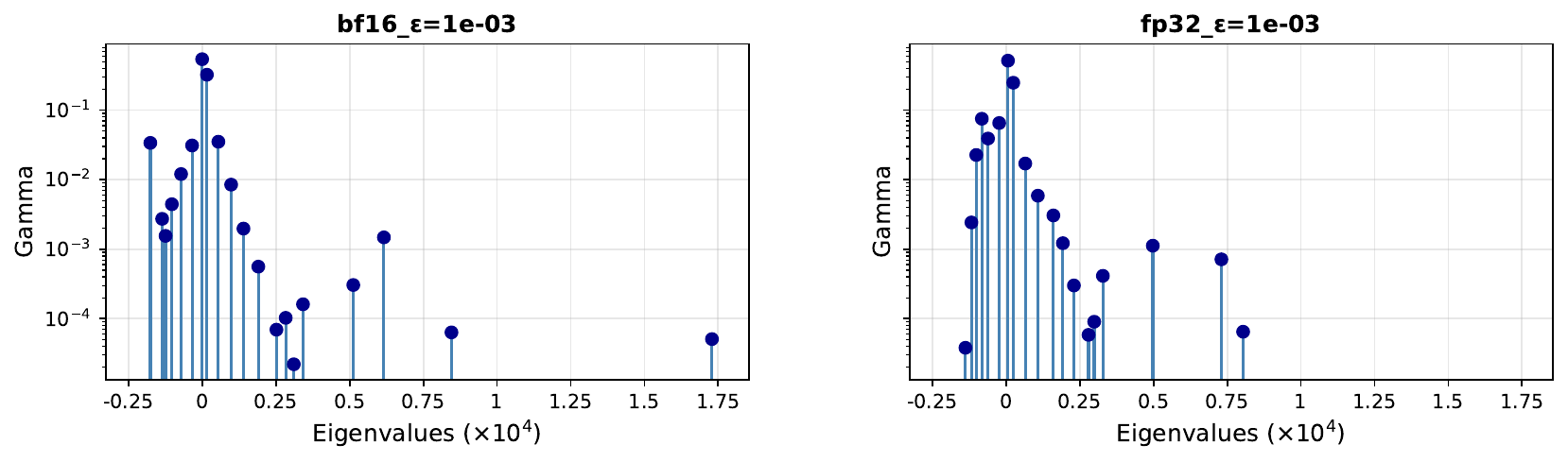}
	\caption{Pairwise structure for $\varepsilon = 10^{-3} \approx  \varepsilon_{\mathrm{opt}}$.}
	\label{fig:pairs_eps_1em03}
\end{figure}

\begin{figure}[!b]
	\centering
	\includegraphics[
	width=\linewidth,
	trim=0 0 0 20,
	clip
	]{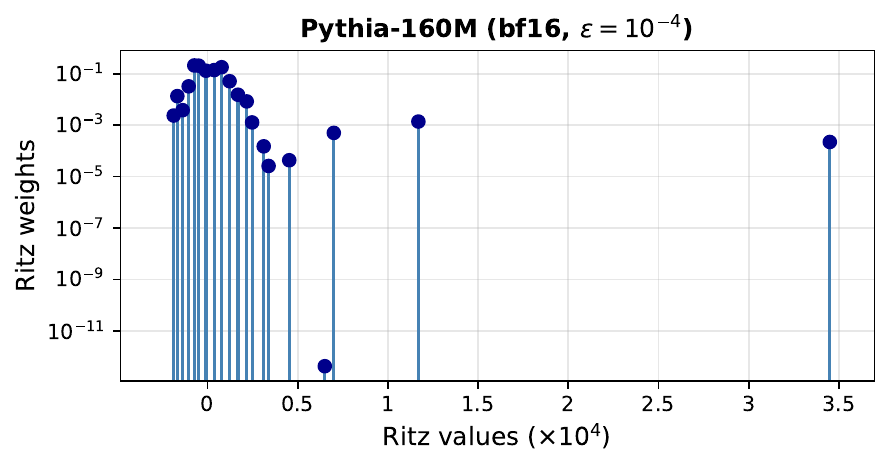}
	\caption{Results for \texttt{Pythia-160M} (bf16) at $\varepsilon = 10^{-4}$ using a 30\% subtraining split.}
	\label{fig:pythia160m_bf16_eps1e4_subtrain30pct}
\end{figure}

\begin{figure}[!t]
	\centering
	\begin{subfigure}{0.48\linewidth}
		\centering
		\includegraphics[width=\linewidth]{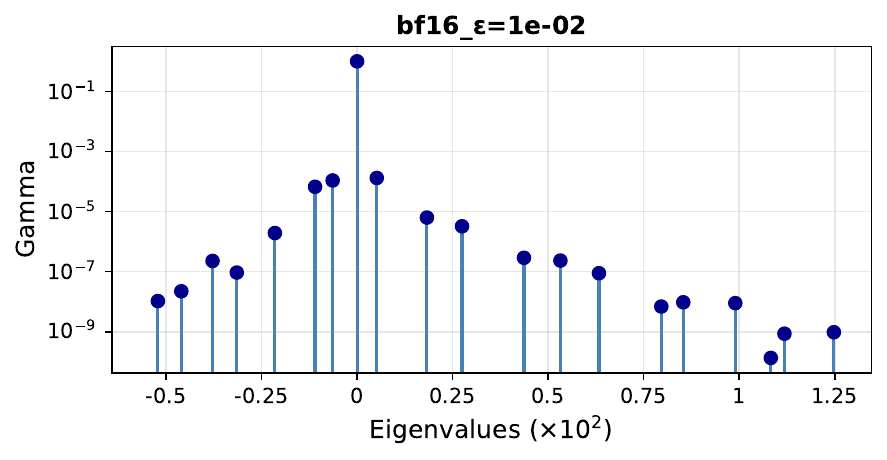}
		\caption{$\varepsilon = 10^{-2}$}
		\label{fig:stem_a}
	\end{subfigure}
	\begin{subfigure}{0.48\linewidth}
		\centering
		\includegraphics[width=\linewidth]{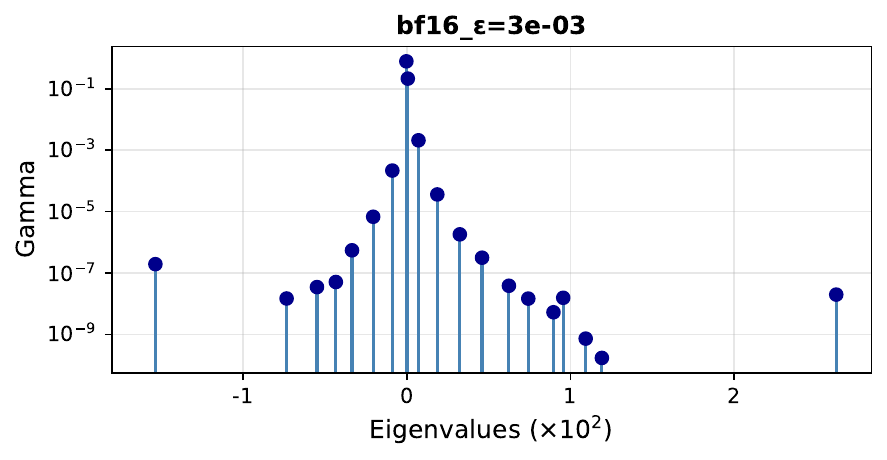}
		\caption{$\varepsilon = 3\times10^{-3}$}
		\label{fig:stem_b}
	\end{subfigure}
	\begin{subfigure}{0.48\linewidth}
		\centering
		\includegraphics[width=\linewidth]{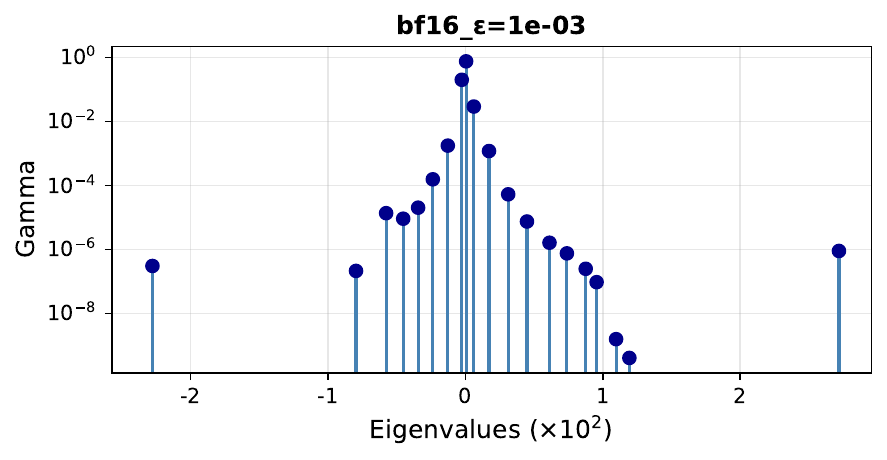}
		\caption{$\varepsilon = 10^{-3}$}
		\label{fig:stem_c}
	\end{subfigure}
	\begin{subfigure}{0.48\linewidth}
		\centering
		\includegraphics[width=\linewidth]{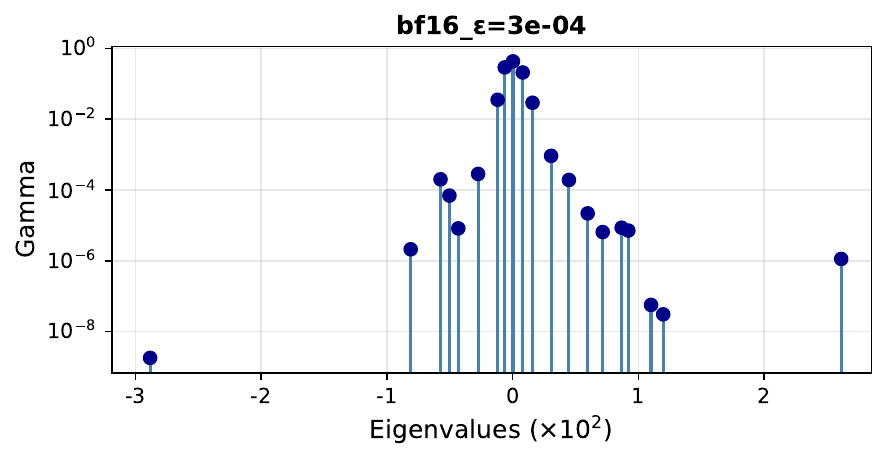}
		\caption{$\varepsilon = 3\times10^{-4}$}
		\label{fig:stem_d}
	\end{subfigure}
	\caption{Stem plots of $\gamma$ versus Hessian eigenvalues for \textbf{Pythia-1.4B (bf16)}.}
	\label{fig:stem_pair_2}
\end{figure}


\paragraph{Impact of Subsampling:} We find as shown in Figure \ref{fig:pythia160m_bf16_eps1e4_subtrain30pct} that increasing the in distribution dataset size by a factor of $30$ does not meaningfully alter the spectral distribution. This indicates using random matrix theretic arguments \citep{granziol2019learningrate} that the variance per Hessian element $H_{ij}$ is low. This implies that we expect curvature based methods such as second order or stochastic gradient optimisers to perform well at small batch size and that the curvature based quantisation methods require a small calibration set, both of which are observed in practice.

\paragraph{Deviations at Scale:}We repeat the same experiment but for a large Pythia $1.4$B model and find a relatively consistent spectrum for $10^{-4} <= \epsilon <= 10^{-3}$ as shown in Figure \ref{fig:stem_pair_2}. This is an interesting finding not reported in the literature.  What this exposes is that larger models trained on the same dataset are more stable in metrics of \textit{global} curvature compared to their smaller counterparts.

\section{Conclusion}

We presented a shard-preserving framework for computing exact Hessian  vector products under Fully Sharded Data Parallelism, enabling second-order analysis at the scale of modern foundation models. By avoiding full parameter gathers and aligning curvature computation with existing FSDP communication patterns, our method transforms the Hessian from a theoretical object into a practical systems primitive. Empirically, we demonstrated near-linear scaling across nodes and only modest constant-factor overhead relative to first-order gradient evaluation.

Building on this primitive, we performed stochastic Lanczos quadrature on language models ranging from hundreds of millions to over 100 billion parameters, producing the first Hessian spectral density estimates at true foundation-model scale. We analyzed the numerical behavior of finite-difference Hessian  vector products in this regime, characterizing truncation bias, floating-point noise amplification, and Krylov stability. These analyses yielded concrete operating regimes for the finite-difference step size and Lanczos precision, which were validated empirically across model sizes, precisions, and subsampling levels.

Our results expose qualitative failures of widely used curvature approximations. In particular, we showed that block-diagonal Hessian assumptions—central to natural-gradient methods, influence-function approximations, and post-training quantization—can incur order-one relative error and poor directional alignment even in mid-scale transformer models, with these effects persisting or worsening under fine-tuning. Direct access to the full Hessian reveals strong cross-layer coupling that is systematically discarded by such approximations.

Beyond methodology, our findings suggest that global curvature statistics stabilize with scale: larger models trained on the same data exhibit more robust Hessian spectral structure across a wide range of finite-difference step sizes and subsampling ratios. This observation, which has not been previously reported, points to a form of emergent regularity in large-scale optimization landscapes and has implications for curvature-aware optimization, monitoring, and compression.

Taken together, our work establishes that faithful Hessian analysis is not only feasible but informative at foundation-model scale. By making exact curvature accessible under realistic distributed training regimes, we open the door to principled second-order optimization, reliable influence analysis, and quantitatively grounded model compression for the next generation of large-scale models.

From a practical standpoint, our results indicate that storing Lanczos basis vectors in bfloat16—while retaining float32 arithmetic for scalar operations—is sufficient to recover stable Hessian spectra in large language models, substantially reducing memory overhead without compromising spectral fidelity.

\newpage
\section*{Broader Impact}

This work advances the practical accessibility of second-order information for large-scale machine learning systems by making faithful Hessian analysis tractable under realistic distributed training regimes. By enabling curvature-based diagnostics, optimization, and analysis at foundation-model scale, the methods introduced here may contribute to improved training stability, more efficient optimization, and better-informed model compression and deployment strategies. In particular, reliable access to Hessian spectra can support early detection of training pathologies, principled tuning of optimization hyperparameters, and more transparent assessment of model sensitivity to data and parameter perturbations.

At the same time, the techniques described in this paper are primarily infrastructural and analytical in nature. They do not directly introduce new model capabilities, nor do they alter the functional behavior of trained models. As such, we do not anticipate direct societal harms arising uniquely from the use of these methods. However, like many advances in large-scale machine learning infrastructure, improved optimization and analysis tools may indirectly lower the cost of training or refining high-capacity models, potentially accelerating the deployment of powerful systems. Responsible use therefore requires that practitioners continue to apply appropriate safeguards, evaluation protocols, and governance mechanisms when developing and deploying large models.

We also note that our findings highlight limitations of commonly used curvature approximations, such as block-diagonal Hessian assumptions, which are often employed in downstream tasks including model compression and influence analysis. While exposing these limitations can lead to more reliable methods, misuse or misinterpretation of approximate curvature information could result in misleading conclusions if applied without validation. We encourage practitioners to treat curvature-based tools as diagnostic aids rather than definitive measures, particularly when applied outside the regimes studied here.

Overall, we view this work as enabling more principled and transparent analysis of large-scale learning dynamics. By improving access to exact curvature information, it has the potential to support safer, more reliable, and better-understood foundation models when combined with responsible engineering and deployment practices.

\bibliographystyle{plainnat}
\bibliography{refs}

\appendix

\section{Understanding Finite Difference Hessian Approximations}
\begin{proof}
	Expanding $\nabla_\theta L(\theta \pm \varepsilon v)$ in a Taylor series around
	$\theta$ yields
	\[
	\nabla_\theta L(\theta \pm \varepsilon v)
	=
	\nabla_\theta L(\theta)
	\pm \varepsilon Hv
	+ \frac{\varepsilon^2}{2}\nabla_\theta(D_v^2 L(\theta))
	\pm \frac{\varepsilon^3}{6}\nabla_\theta(D_v^3 L(\theta))
	+ O(\varepsilon^4).
	\]
	Subtracting the two expansions and dividing by $2\varepsilon$ gives
	\[
	\widetilde{H}v
	=
	Hv
	+ \frac{\varepsilon^2}{6}\,\nabla_\theta(D_v^3 L(\theta))
	+ O(\varepsilon^4),
	\]
	which yields the truncation error term. Each gradient evaluation is subject to
	relative floating-point error of order $\varepsilon_{\mathrm{mach}}$, which is
	amplified by subtracting nearly equal vectors, resulting in a dominant roundoff
	contribution of magnitude
	\(
	O(\varepsilon_{\mathrm{mach}}\|\nabla_\theta L(\theta)\|/\varepsilon)
	\).
	Balancing the leading truncation and roundoff terms yields the stated optimal step
	size and error scaling.
\end{proof}

\begin{proof}
	We decompose the approximation error into a deterministic truncation (bias)
	term and a stochastic finite-precision noise term.
	
	\paragraph{Truncation (Bias) Error.}
	Since $f \in C^4(\mathbb{R}^n)$, a Taylor expansion of the directional curvature
	along the line $x + t v$ yields
	\begin{align}
		v^\top H(x + t v) v
		&=
		v^\top H(x) v
		+
		t\, D^3 f(x)[v,v,v]
		\nonumber\\
		&\quad
		+
		\frac{t^2}{2}
		D^4 f(x)[v,v,v,v]
		+
		O(t^3).
	\end{align}
	Substituting this expansion into the averaging representation of
	$H_\varepsilon(x)[v]$ and using the symmetry of the weighting kernel
	$w_\varepsilon$ eliminates the odd-order term, since
	\begin{align}
		\int_{-\varepsilon}^{\varepsilon}
		t\, w_\varepsilon(t)\,dt
		&= 0.
	\end{align}
	The leading bias term is therefore
	\begin{align}
		\frac{1}{\varepsilon}
		\int_{-\varepsilon}^{\varepsilon}
		\frac{t^2}{2}
		D^4 f(x)[v^{\otimes 4}]
		\, w_\varepsilon(t)\,dt.
	\end{align}
	A direct computation gives
	\begin{align}
		\frac{1}{\varepsilon}
		\int_{-\varepsilon}^{\varepsilon}
		t^2 w_\varepsilon(t)\,dt
		&=
		\frac{\varepsilon^2}{6},
	\end{align}
	so that
	\begin{align}
		H_\varepsilon(x)[v]
		&=
		v^\top H(x)v
		+
		\frac{\varepsilon^2}{12}
		D^4 f(x)[v^{\otimes 4}]
		+
		O(\varepsilon^4).
	\end{align}
	Consequently, the deterministic truncation error satisfies
	\begin{align}
		\bigl|
		H_\varepsilon(x)[v]
		-
		v^\top H(x)v
		\bigr|
		&\le
		C_1\, \varepsilon^2 \|D^4 f\|.
	\end{align}
	
	\paragraph{Finite-Precision and Noise Error.}
	With noisy function evaluations,
	\begin{align}
		\tilde H_\varepsilon(x)[v]
		&=
		H_\varepsilon(x)[v]
		+
		\frac{
			\eta(x+\varepsilon v)
			-
			2\eta(x)
			+
			\eta(x-\varepsilon v)
		}{\varepsilon^2}.
	\end{align}
	Since the noise terms are independent with variance $\sigma_f^2$, the variance
	of the noise contribution is
	\begin{align}
		\mathrm{Var}(\tilde H_\varepsilon)
		&=
		\frac{6\sigma_f^2}{\varepsilon^4},
	\end{align}
	implying a root-mean-square noise error of order
	\begin{align}
		O\!\left(
		\frac{\sigma_f}{\varepsilon^2}
		\right).
	\end{align}
	
	\paragraph{Combination and Optimisation.}
	Combining truncation and noise contributions yields
	\begin{align}
		\text{Error}(\varepsilon)
		&=
		O\!\left(
		\varepsilon^2 \|D^4 f\|
		\right)
		+
		O\!\left(
		\frac{\sigma_f}{\varepsilon^2}
		\right).
	\end{align}
	Balancing the two terms gives
	\(
	\varepsilon_{\mathrm{opt}}
	\asymp
	(\sigma_f / \|D^4 f\|)^{1/4}
	\),
	and substituting this value yields the stated minimal error scaling.
\end{proof}

\subsection{Finite Difference as Local Averaging}
\label{sec:localavg}
\begin{theorem}
	For $f \in C^2(\mathbb{R}^n)$,
	\begin{equation}
		H_\varepsilon(x)[v]
		=
		\frac{1}{\varepsilon}
		\int_{-\varepsilon}^{\varepsilon}
		w_\varepsilon(t)
		\;
		v^\top H(x + t v)\, v
		\;
		dt,
	\end{equation}
	where
	\begin{equation}
		w_\varepsilon(t)
		=
		1
		-
		\frac{|t|}{\varepsilon},
	\end{equation}
	and $H(x) = \nabla^2 f(x)$.
\end{theorem}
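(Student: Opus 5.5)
We reduce the claim to a one-dimensional identity. Set $g(t) = f(x+tv)$ for $t\in\mathbb{R}$. Since $f\in C^2(\mathbb{R}^n)$, the chain rule gives $g\in C^2(\mathbb{R})$ with $g'(t) = \nabla f(x+tv)^\top v$ and $g''(t) = v^\top H(x+tv)\,v$. Then
\[
H_\varepsilon(x)[v] = \frac{g(\varepsilon) - 2g(0) + g(-\varepsilon)}{\varepsilon^2}.
\]
It therefore suffices to show
\[
g(\varepsilon)-2g(0)+g(-\varepsilon) = \int_{-\varepsilon}^{\varepsilon}(\varepsilon-|t|)\,g''(t)\,dt,
\]
because dividing by $\varepsilon^2$ and writing $\varepsilon-|t| = \varepsilon\,w_\varepsilon(t)$ produces exactly the factor $\tfrac{1}{\varepsilon}w_\varepsilon(t)$ in the statement.

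To prove the identity, we use Taylor's theorem with integral remainder separately on each half-line. For $\varepsilon>0$,
\[
g(\varepsilon) = g(0) + \varepsilon g'(0) + \int_0^{\varepsilon}(\varepsilon - t)\,g''(t)\,dt.
\]
Similarly,
\[
g(-\varepsilon) = g(0) - \varepsilon g'(0) + \int_{-\varepsilon}^{0}(\varepsilon + t)\,g''(t)\,dt,
\]
which follows by applying the first formula to $\tilde g(s)=g(-s)$ and substituting $t=-s$. Adding the two expansions cancels the first-order terms $\pm\varepsilon g'(0)$. Each remaining kernel equals $\varepsilon-|t|$ on its own half-interval, so the two integrals combine into one over $[-\varepsilon,\varepsilon]$ with kernel $\varepsilon-|t|$. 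Dividing by $\varepsilon^2$ finishes the proof. Alternatively, one can check the identity by integrating $\int_{-\varepsilon}^{\varepsilon}(\varepsilon-|t|)g''(t)\,dt$ by parts on each half. The boundary terms at $0$ cancel, the kernel vanishes at $\pm\varepsilon$, and one is left with $\int_0^\varepsilon g' - \int_{-\varepsilon}^0 g'$, which telescopes to the same result.

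There is no real obstacle; the only point needing care is the sign bookkeeping on the negative half-interval. The hypothesis $f\in C^2$ is exactly what makes $g''$ continuous, so the integral-remainder form of Taylor's theorem applies. We also note that the normalisation $\frac{1}{\varepsilon}\int w_\varepsilon = 1$ is consistent with the quadratic case: there $g''$ is constant and both sides agree.
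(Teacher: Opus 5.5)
Your proof is correct and follows essentially the same route as the paper: Taylor's theorem with integral remainder at $x\pm\varepsilon v$, cancellation of the first-order terms, and merging the two half-line integrals into a single integral against the kernel $\varepsilon-|t|$. Your reduction to the scalar function $g(t)=f(x+tv)$ and your explicit substitution on the negative half-interval spell out what the paper compresses into ``by symmetry''.
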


\begin{proof}
	From the integral form of Taylor's theorem,
	\begin{equation}
		f(x + \varepsilon v)
		=
		f(x)
		+
		\varepsilon \nabla f(x)^\top v
		+
		I_+(x),
	\end{equation}
	with
	\begin{equation}
		I_+(x)
		=
		\int_0^\varepsilon
		(\varepsilon - t)
		\;
		\bigl[
		v^\top H(x + t v)\, v
		\bigr]
		\;
		dt .
	\end{equation}
	
	Similarly,
	\begin{equation}
		f(x - \varepsilon v)
		=
		f(x)
		-
		\varepsilon \nabla f(x)^\top v
		+
		I_-(x),
	\end{equation}
	where
	\begin{equation}
		I_-(x)
		=
		\int_0^\varepsilon
		(\varepsilon - t)
		\;
		\bigl[
		v^\top H(x - t v)\, v
		\bigr]
		\;
		dt .
	\end{equation}
	
	Adding the two expansions yields
	\begin{equation}
		f(x+\varepsilon v)
		+
		f(x-\varepsilon v)
		-
		2 f(x)
		=
		I_+(x)
		+
		I_-(x).
	\end{equation}
	
	Combining the integrals gives
	\begin{equation}
		I_+(x)
		+
		I_-(x)
		=
		\int_0^\varepsilon
		(\varepsilon - t)
		\;
		\Xi(t)
		\;
		dt,
	\end{equation}
	with
	\begin{equation}
		\Xi(t)
		=
		v^\top H(x + t v)\, v
		+
		v^\top H(x - t v)\, v .
	\end{equation}
	
	By symmetry,
	\begin{equation}
		\int_0^\varepsilon
		(\varepsilon - t)
		\;
		\Xi(t)
		\;
		dt
		=
		\int_{-\varepsilon}^{\varepsilon}
		(\varepsilon - |t|)
		\;
		v^\top H(x + t v)\, v
		\;
		dt .
	\end{equation}
	
	Dividing by $\varepsilon^2$ gives $		\frac{
		f(x+\varepsilon v)
		-
		2 f(x)
		+
		f(x-\varepsilon v)
	}{
		\varepsilon^2
	}
	=$
	\begin{equation}
		\frac{1}{\varepsilon}
		\int_{-\varepsilon}^{\varepsilon}
		\left(
		1
		-
		\frac{|t|}{\varepsilon}
		\right)
		v^\top H(x + t v)\, v
		\;
		dt ,
	\end{equation}
	which proves the claim.
\end{proof}
\section{FSDP vs DP for small models}
\label{sec:fsdpvsdp}
To formalise the less than linear scaling for FSDP vs DP in the regime where both fit on memory, let $C$ be the single-GPU compute time for a step, $K$ the number of data-parallel ranks, $P$ the total parameter size (bytes), and $(\alpha,\beta)$ the latency/bandwidth costs of a ring all-reduce.

Under DP, a step consists of compute plus (approximately) one gradient all-reduce:
\begin{equation}
	\label{eq:tdp}
	T_{\mathrm{DP}} \;=\; \frac{C}{K} \;+\; \alpha (K-1) \;+\; 2\beta P.
\end{equation}
Under FSDP ZeRO-3, parameters are all-gathered and gradients are reduce-scattered at \emph{block granularity}. If the transformer is wrapped into $L$ FSDP units, then both forward and backward invoke per-block collectives, yielding a higher communication frequency:
\begin{equation}
	\label{eq:tfsdp}
	T_{\mathrm{FSDP}}
	\;=\;
	\frac{C}{K}
	\;+\;
	4\alpha (K-1)L
	\;+\;
	8\beta P,
\end{equation}
where constants reflect that (i) all-gathers/reduce-scatters occur repeatedly across blocks and (ii) parameters/gradients are communicated multiple times across the two passes.\footnote{The exact constants depend on overlap and implementation details; the key point is the \emph{linear dependence on $L$} in the latency term and the larger effective bandwidth term relative to DP.}

Subtracting \eqref{eq:tdp} from \eqref{eq:tfsdp} gives
\begin{equation}
	\label{eq:fsdp-slower}
	T_{\mathrm{FSDP}} - T_{\mathrm{DP}}
	\;=\;
	4\alpha (K-1)(L-1) \;+\; 6\beta P \;>\; 0,
\end{equation}
so ZeRO-3 is provably slower than DP whenever the model fits in memory and both regimes are feasible.

For deep transformers where communication is a substantial fraction of step time (e.g., $L\!\approx\!32$-$48$ on 8$\times$A100), it is realistic to have $T_{\mathrm{comp}}\!\approx\!80$ms, $T_{\mathrm{DP,comm}}\!\approx\!20$ms, and $T_{\mathrm{FSDP,comm}}\!\approx\!55$ms, yielding:
\begin{equation}
	T_{\mathrm{DP}} \approx 100\mathrm{ms},\quad
	T_{\mathrm{FSDP}} \approx 135\mathrm{ms},\quad
	\frac{T_{\mathrm{FSDP}}-T_{\mathrm{DP}}}{T_{\mathrm{DP}}}\approx 0.35.
\end{equation}
This order-of-magnitude slowdown matches what one should expect when choosing ZeRO-3 in a regime where DP is also possible. We emphasise that our contribution is not to optimise FSDP itself, but to provide an HVP/Lanczos wrapper whose cost profile \emph{tracks} the underlying FSDP backend.
\section{Appendix C: Detailed Cost Breakdown for Distributed Lanczos}
\label{app:lanczos-cost}

This appendix provides a detailed accounting of the computational and
communication costs incurred by one distributed Lanczos iteration under
FSDP ZeRO-3, complementing the high-level cost expressions given in the main text.

\paragraph{Cost of one dataset-averaged HVP.}
All additional work outside the two FSDP gradient passes consists of
linear-time shard-local vector operations over $P_{\mathrm{loc}} = P/R$
parameters (AXPY updates, copies, and scaling), plus at most one scalar
all-reduce.
Grouping these operations into $T_{\mathrm{vec}}$, the cost of one
dataset-averaged Hessian  vector product is
\begin{equation}
	\label{eq:thvp-app}
	T_{\mathrm{HvP}}
	\;=\;
	2\,T_{\mathrm{grad}}
	\;+\;
	T_{\mathrm{vec}},
	\qquad
	T_{\mathrm{vec}}
	=
	\mathcal{O}(P_{\mathrm{loc}}\,\gamma)
	+
	\mathcal{O}(T_{\mathrm{scalar}}),
\end{equation}
where $\gamma$ denotes the cost per shard-local vector operation and
$T_{\mathrm{scalar}}$ the latency of a scalar all-reduce.
Importantly, the HVP introduces no additional $O(P)$-sized collectives beyond
those already implied by FSDP execution.

\paragraph{Lanczos iteration mechanics.}
At iteration $t$, each rank stores shard-local pieces of the current and
previous Lanczos vectors $v_t$ and $v_{t-1}$.
Given the Hessian  vector product $w \leftarrow H v_t$, global recurrence
coefficients are formed via scalar reductions,
\begin{equation}
	\alpha_t = \langle w, v_t \rangle,
	\qquad
	\beta_t = \|w\|_2,
\end{equation}
followed by the standard three-term recurrence
\begin{equation}
	\label{eq:lanczos-recurrence-app}
	w \leftarrow w - \alpha_t v_t - \beta_t v_{t-1},
	\qquad
	v_{t+1} \leftarrow \frac{w}{\|w\|_2}.
\end{equation}
All vector updates are performed shard-locally.

\paragraph{Reorthogonalisation.}
When reorthogonalisation is enabled with a sliding window of size $r$,
the vector $w$ is additionally orthogonalised against up to $r$ previously
stored Lanczos vectors.
Each reorthogonalisation step requires one global dot product (scalar
all-reduce) and one shard-local AXPY update.

\paragraph{Per-iteration cost.}
Let $c_0,c_1>0$ be constants capturing the number of shard-local vector
operations required per iteration, excluding the Hessian  vector product.
Then the cost of one Lanczos iteration with window size $r$ is
\begin{equation}
	\label{eq:tlanczos-app}
	T_{\mathrm{Lanczos\ iter}}(r)
	\;=\;
	T_{\mathrm{HvP}}
	\;+\;
	(2+r)\,T_{\mathrm{scalar}}
	\;+\;
	\big(c_0 + c_1 r\big)\,P_{\mathrm{loc}}\,\gamma.
\end{equation}
Substituting $P_{\mathrm{loc}} = P/R$ yields the expanded expression
\begin{equation}
	\label{eq:tlanczos-expanded-app}
	T_{\mathrm{Lanczos\ iter}}(r)
	=
	2\,T_{\mathrm{grad}}
	+
	T_{\mathrm{vec}}
	+
	(2+r)\,T_{\mathrm{scalar}}
	+
	\big(c_0 + c_1 r\big)\,\frac{P}{R}\,\gamma.
\end{equation}

\paragraph{End-to-end complexity.}
For stochastic Lanczos quadrature (SLQ) with $m$ Krylov steps and $s$ probe
vectors, the total runtime is approximately
\begin{equation}
	T_{\mathrm{SLQ}}
	\;\approx\;
	s\,m\,T_{\mathrm{Lanczos\ iter}}(r)
	+
	T_{\mathrm{post}},
\end{equation}
where $T_{\mathrm{post}}$ accounts for tridiagonal eigendecompositions and
quadrature evaluation and is typically negligible relative to the $sm$
HVP calls at scale.
Conjugate-gradient-based inverse-HVP solvers admit an analogous cost
structure, differing only in constant factors.

\end{document}